\newtheorem{theorem}{Theorem}
\newtheorem{proposition}{Proposition}
\newtheorem{lemma}{Lemma}
\newtheorem{definition}{Definition}
\newcommand{\R}{\mathbb{R}}
\DeclareMathOperator{\dive}{div}
\DeclareMathOperator{\vol}{vol}
\newcommand{\norm}[1]{\left\lVert#1\right\rVert}
\begin{document}

\begin{frontmatter}

\title{VPNets: Volume-preserving neural networks for learning source-free dynamics}
\author[cas,ucas]{Aiqing Zhu}
\author[ustb]{Beibei Zhu}
\author[cas,ucas]{Jiawei Zhang}
\author[cas,ucas]{Yifa Tang}
\author[ustc,qlut]{Jian Liu\corref{cor}}
\cortext[cor]{Corresponding author: jliuphy@ustc.edu.cn (Jian Liu)}

\address[cas]{LSEC, ICMSEC, Academy of Mathematics and Systems Science, Chinese Academy of Sciences, Beijing 100190, China}
\address[ucas]{School of Mathematical Sciences, University of Chinese Academy of Sciences, Beijing 100049, China}
\address[ustb]{School of Mathematics and Physics, University of Science and Technology Beijing, Beijing 100083, China}
\address[ustc]{School of Nuclear Science and Technology, University of Science and Technology of China, Hefei, Anhui 230026, China}
\address[qlut]{Advanced Algorithm Joint Lab, Shandong Computer Science Center, Qilu University of Technology, Jinan, Shandong 250014, China}

\begin{abstract}
We propose volume-preserving networks (VPNets) for learning unknown source-free dynamical systems using trajectory data. We propose three modules and combine them to obtain two network architectures, coined R-VPNet and LA-VPNet. The distinct feature of the proposed models is that they are intrinsic volume-preserving. In addition, the corresponding approximation theorems are proved, which theoretically guarantee the expressivity of the proposed VPNets to learn source-free dynamics. The effectiveness, generalization ability and structure-preserving property of the VP-Nets are demonstrated by numerical experiments.
\end{abstract}
\begin{keyword}
Deep learning\sep Neural networks \sep Discovery of dynamics \sep Source-free dynamics \sep Volume-preserving
\end{keyword}

\end{frontmatter}


\section{Introduction}\label{sec:Introduction}
Data-driven discovery has received increasing attention in diverse scientific disciplines. There are extensive attempts to treat this problem using symbolic regression \cite{bongard2007automated,schmidt2009distilling}, Gaussian process \cite{raissi2017machine,kocijan2005dynamic} as well as Koopman theory \cite{brunton2017chaos}. Along with the rapid advancements of machine learning, neural networks were proposed to handle this problem and proven to be a valuable tool due to their remarkable abilities to learn and generalize from data. The pioneering efforts on using neural networks for discovery dated back to the early 1990s \cite{anderson1996comparison,gonzalez1998identification,rico1994continuous,rico1993continuous}, where they combined neural networks (NNs) and numerical integration to reconstruct the unknown governing function and hence depict the trajectories. Recently, this idea has been further explored and applied to more challenging tasks \cite{chen2018neural,kolter2019learning,qin2019data, raissi2018multistep}.

Recently, researchers empirically observed that encoding prior physical structures into the learning algorithm can enlarge the information content of the data, and yield trained models with good stability and generalization. For example, OnsagerNets \cite{yu2020onsagernet} embedded generalized Onsager principle into the learning model to retain physical structure including free energy, dissipation, conservative interaction and external force. GFINNs \cite{zhang2021gfinns} were proposed to obey the symmetric degeneracy conditions via orthogonal modules for the GENERIC formalism. A special structure for learning nonlinear operators was embedded in DeepONets \cite{lu2021learning}, where its performance was verified across diverse applications. For more extensive works on structure-preserving deep learning, we refer to \cite{celledoni2021structure}.

In particular, incorporating Hamiltonian equation or symplectic structure into neural networks has been widely studied and many satisfactory results have been obtained. Recent works \cite{bertalan2019learning, chen2020symplectic, greydanus2019hamiltonian, tong2021symplectic, wu2020structure, xiong2021nonseparable, zhong2020symplectic} mainly focus on approximating Hamiltonian vector field from phase space data by means of using numerical integration to reconstruct symplectic map. Most of the aforementioned approaches rely on the vector field and may introduce additional numerical errors during training \cite{du2021discovery,keller2021discovery,zhu2020inverse} and predicting processes. Regarding this issue, GFNN \cite{chen2021data} was proposed to learn generating function in order to reconstruct symplectic map. Theoretical and experimental results show that the global error of GFNN increases linearly. SympNets \cite{jin2020sympnets} stacked up triangular maps to construct new intrinsic symplectic networks, where rigorous approximation theorems
were built.

More generally than Hamiltonian systems, source-free systems are classical cases of dynamical systems with certain geometric structures and exist in many physical fields such as plasmas and incompressible fluids.  A remarkable property of source-free systems is that their latent flow map is volume-preserving. Specifically, the Hamiltonian system is source-free and the symplectic map is volume-preserving. Compared with Hamiltonian systems, researchers pay less attention to source-free dynamics. Due to the superiority of structure-preserving properties, our goal is to embed volume-preserving structure into neural networks. Learning dynamics plays an important role in various applications of machine learning such as robotic manipulation \cite{hersch2008dynamical}, autonomous driving \cite{levinson2011towards} and other motion planning tasks. Many studies have demonstrated the significance of encoding inductive biases based on physical laws into neural network architectures \cite{lake2017building,marcus2020next}. However, the question of which structure should be incorporated into the model still remains open. The volume-preserving neural networks that we construct in this paper investigate the less explored volume-preserving structure, and potentially open a new path of learning real world dynamics.

To begin with, we present some preliminary knowledge. A map $F:\R^D \rightarrow \R^D$ preserves volume if for every bounded open set $\Omega \subset \R^D$
\begin{equation}\label{eq:vp1}
\vol(F(\Omega)) = \vol(\Omega),
\end{equation}
where $\vol(\Omega) = \int_{\Omega} dy$. In particular, $F$ is volume-preserving if $F$ is bijective, continuously differentiable and $\det \frac{\partial F(x)}{\partial x} = 1$ due to the transformation formula for integrals.
Since the determinant of symplectic matrix is also one, volume-preservation is more general than symplectic structure. A continuous dynamical system can be written as
\begin{equation*}
\frac{d y}{dt} = f(y(t)), \quad y(0)=x,
\end{equation*}
where $y \in \R^D$. Let $\phi_t(x)$ be the phase flow with initial condition $y(0)=x$. If the vector field $f(y)$ is source-free, i.e., $\dive f(y) = \sum_i^D \frac{\partial f}{\partial y_i} = 0$, then $\phi_t(x)$ preserves volume in phase space \cite{hairer2006geometric}, viz., $\det \frac{\partial \phi_t(x)}{\partial x} = 1$. There have been many efforts focused on constructing volume-preserving networks in the literature. A volume-preserving approach \cite{macdonald2021volume} was proposed to lessen the vanishing (or exploding) gradient problem in deep learning. Locally symplectic neural networks \cite{bajars2021locally} were developed recently to learn volume-preserving dynamics. However, these models lack the approximation guarantees, and no approximation theorem has been proven. In addition, NICE \cite{dinh2015nice} were proposed for unsupervised generative modeling. Reversible residual networks (RevNets) \cite{gomez2017the} were proposed to avoid storing intermediate activation during backpropagation. It is proved that NICE or RevNets are able to approximate every volume-preserving map in our previous work \cite{zhu2022approximation}. However, these two models are not designed for learning dynamics. In fact, one of our proposed architectures is an extension of NICE using dimension-splitting mechanisms investigated in \cite{zhu2022approximation}.

In this paper, we propose new intrinsic volume-preserving neural networks to learn source-free dynamics by directly observing the system's states. The reconstructed and predicted dynamics can automatically satisfy the volume-preserving property. We also prove the approximation theorems, demonstrating that the proposed VPNets are sufficiently expressive to learn any source-free dynamic, respectively. Over all, the main contributions of our work can be summarized as below:
\begin{itemize}
    \item We develop intrinsic volume-preserving models using neural networks.
    \item We prove that the proposed models are capable of approximating arbitrary volume-preserving flow maps.
    \item Numerically, the proposed models can learn and predict source-free dynamics by directly observing the system's states, even if such observation is partial and the sample data is sparse.
\end{itemize}

The rest of this paper is organized as follows. Section \ref{sec: The VPNets architecture} introduces the detailed procedure of the learning algorithm and the construction of the VPNets. The approximation theorem of proposed VPNets is presented in Section \ref{sec: Approximation results of VPNets}. Section \ref{sec: Numerical results} provides several experimental results for source-free systems. In Section \ref{sec: Conclusions and future works}, we give a brief summary and discuss future directions.

\section{Learning method and the VPNet architectures}\label{sec: The VPNets architecture}
Consider a continuous dynamical system
\begin{equation}\label{eq:ODE}
\frac{d y}{dt} = f(y(t)), \quad y(0)=x,
\end{equation}
where $y \in \R^D$. Let $\phi_t(x)$ be the exact solution of (\ref{eq:ODE}) with initial condition $y(0)=x$. In this paper, we aim to learn the phase flow $\phi_T$ of a unknown dynamical system from data $ \mathcal{T}= \{(x_i, y_i)\}_{i=1}^I$, where $y_i = \phi_T(x_i)$, $x_i,y_i \in \R^D$. Typically, training data are the states at equidistant time steps of one or more trajectories, i.e., $x_1, \cdots, x_{I+1}$ where $x_i = \phi_{iT}(x_1)$. These data points can be grouped in pairs and written as $\mathcal{T}= \{(x_i, y_i)\}_{i=1}^I$ where $y_i =x_{i+1}= \phi_T(x_i)$. Network is trained by minimizing the mean-squared-error loss
\begin{equation}\label{eq:mse}
\text{MSE} =  \frac{1}{D \cdot I}\sum_{i =1}^I \norm{\psi_{net}(x_i) - y_i},
\end{equation}
where $\psi_{net}$ is neural networks with trainable parameters. This task appears in many contexts (see Section \ref{sec:Introduction}). Herein, we consider a very specific one, i.e., $\phi_T$ is the phase flow of a source-free system. Same as symplecticity requirement for learning Hamiltonian system, we should carefully construct networks to ensure that the learning model $\psi_{net}$ has volume-preserving structure (\ref{eq:vp1}) since the flow of source-free system preserves volume.

In this paper, as intrinsic volume-preserving structure is embedded into the network $\psi_{net}$, we name the network as volume-preserving neural networks (VPNets). A VPNet is highly flexible via composing the following three alternative modules that we present below. We will introduce two kinds of VPNets, from the perspective of both approximation and simulation.

\begin{table}[htbp]
    \centering
    \begin{tabular}{p{70pt}|p{370pt}}
        \toprule
        $x[i]$ & The $i$-th component (row) of vector (matrix) $x$.\\
        \midrule
        $x[:][j]$ & The $j$-th column of matrix $x$.\\
        \midrule
        $x[i_1:i_2]$ & $(x[i_1],\cdots, x[i_2-1])^\top$ if $x$ is a column vector or $(x[i_1],\cdots, x[i_2-1])$ if $x$ is a row vector, i.e., components from $i_1$ inclusive to $i_2$ exclusive. \\
        \midrule
        $x[\ :i ]$ and $x[i:\ ]$& $x[1:i]$ and  $x[i:D+1]$ for $x \in \R^D$, respectively. \\
        \midrule
        $\overline{x[i_1:i_2]}$ & $(x[\ :i_1]^\top, x[i_2:\ ]^\top)^\top$ if $x$ is a column vector or $(x[\ :i_1], x[i_2:\ ])$ if $x$ is a row vector, i.e., components in the vector $x$ excluding $x[i_1:i_2]$.\\
        \bottomrule
    \end{tabular}
    \caption{Range indexing notations in this paper.}
    \label{tab:notations}
\end{table}

For convenience, range indexing notation, the same kind for Pytorch tensors, is employed throughout this paper. With the help of NumPy and other Python scientific libraries, we can apply the range indexing notation for each dimension of the tensor. Details are present in Table \ref{tab:notations}.

\subsection{Residual modules}
To begin with, we propose residual modules which partition input $x$ and produce output according to the following rule:
\begin{equation*}
\mathcal{R}^{i:j}(x)= \begin{pmatrix}x[\ :i]\\ x[i:j] + \hat{\sigma}_{\theta}(\overline{x[i:j]})\\ x[j:\ ] \end{pmatrix}.
 \end{equation*}
where $\hat{\sigma}_{\theta}$ specifies a neural network with the trainable parameters $\theta$. We set $\hat{\sigma}_{\theta}$ to be a fully connected network with one hidden layer,
\begin{equation*}
\hat{\sigma}_{\theta}(\overline{x[i:j]}) =a \sigma(K\overline{x[i:j]}+b).
\end{equation*}
Here, $\theta = (K,b,a)$ and $K \in \R^{w \times (D-j+i)}, b\in \R^{w}, a\in \R^{(j-i) \times w}$ with width $w$ are trainable parameters and $\sigma: \R \rightarrow \R$ is the activation function applied element-wise to a vector. Popular examples for activation function include the rectified linear unit (ReLU) $\text{ReLU}(z) = \max(0, z)$,
the sigmoid $\text{Sig}(z) = 1/(1 + e^{-z})$ and $\text{tanh}(z)$. The residual module is named due to the residual representation.

This module is inspired by NICE \cite{dinh2015nice} and we use special dimension-splitting mechanism. In the following, we denote the set of the residual modules as
\begin{equation*}
\mathcal{M}_R = \{u\ |\ u \text{ is a residual module}\},
\end{equation*}
and define the composition of residual modules as residual volume-preserving networks (R-VPNets):
\begin{definition}
Consider $u_n \in \mathcal{M}_R$ for $n=1,2,\cdots,N$ and take
\begin{equation*}
\psi_R= u_{N} \circ u_{N-1} \circ \cdots  \circ u_1,
\end{equation*}
where $N$ is the depth. We name $\psi$ as R-VPNet and denote the collection of R-VPNets as
\begin{equation*}
\Psi_R = \{ \psi\ |\ \psi \text{ is a R-VPNet} \}.
\end{equation*}
\end{definition}

\subsection{Linear modules and activation modules}
In addition, we propose LA-VPNets motivated by LA-SympNets. LA-VPNets do not change the approximation properties of the network and are also volume-preserving. These models are composed of linear modules and activation modules. The linear modules are linear transformations preserving the volume and play similar roles as the linear layers do in fully connected neural networks.
For $i = 1, \cdots ,D$ and $i<j\leq D+1$, we denote
\begin{equation}\label{eq:matrix}
L^{i:j} = \left\{ S^{i:j} \in \R^{D \times D} \ \Bigg|\ S^{i:j} = \begin{pmatrix} I_{(i-1) \times (i-1)} & 0_{(i-1) \times (j-i)} & 0_{(i-1) \times (D-j+1)} \\ U_{(j-i) \times (i-1)} & I_{(j-i) \times (j-i)} & V_{(j-i) \times (D-j+1)} \\ 0_{(D-j+1) \times (i-1)} & 0_{(D-j+1) \times (j-i)} & I_{(D-j+1) \times (D-j+1)} \end{pmatrix}\right\},
\end{equation}
where $I_{s \times s}$ is the $s$-by-$s$ identity matrix and $U \in \R^{(j-i)\times (i-1)}, V\in \R^{(j-i)\times (D-j+1)}$ are trainable parameters. The subscript indicates the shape of matrices. In order to strengthen the expressivity, the linear modules are compounded from several matrices of the form (\ref{eq:matrix}), more precisely,
\begin{equation*}
\mathcal{L}(x) = \left(\prod_{m=1}^M S_m \right) x +b,
\end{equation*}
where $S_m \in \cup_{i=1}^{D} \cup_{j=i+1}^{D+1} L^{i:j}$  and trainable bias $b \in \R^{D}$. We denote the set of the linear modules as
\begin{equation*}
\mathcal{M}_L = \{v\ |\ v \text{ is a linear module}\}.
\end{equation*}
All linear modules are automatically volume-preserving since $\det S_m =1$ without any constraints. With the unconstrained parametrization, we can apply efficient unconstrained optimization algorithms of the deep learning framework. We also remark that $\mathcal{M}_L$ can approximate any linear volume-preserving maps and we will prove this remark in Section \ref{sec: Approximation results of VPNets}.

To substitute for the activation layer in fully connected neural networks, we build a simple nonlinear volume-preserving module. The architecture is designed as
\begin{equation*}
\mathcal{A}^{i:j}(x)= \begin{pmatrix}x[\ :i]\\ x[i:j] + a \sigma(\overline{x[i:j]})\\ x[j:\ ] \end{pmatrix},
\end{equation*}
where $a \in \R^{(j-i) \times(D-j+i)}$ are trainable parameters which are added to ensure approximation ability, and $\sigma: \R \rightarrow \R$ is the activation function applied element-wise to a vector. Similar to the linear modules, the set of the activation modules is denoted as
\begin{equation*}
\mathcal{M}_A = \{v \ |\ v \text{ is a activation module}\}.
\end{equation*}
We define the composition of the linear modules and the activation modules as LA-VPNets.
\begin{definition}
Consider $l_n \in \mathcal{M}_L$ for $n=1,2,\cdots,N+1$ and $a_n \in \mathcal{M}_A$ for $n=1,2,\cdots,N$, take
\begin{equation*}
\psi_{LA} = l_{N+1} \circ a_{N} \circ l_{N} \circ \cdots  \circ a_1\circ l_{1},
\end{equation*}
where $N$ is the depth. We name $\psi_{LA}$ as LA-VPNet and denote the collection of LA-VPNets as
\begin{equation*}
\Psi_{LA}= \{ \psi\ |\ \psi \text{ is a LA-VPNet} \}.
\end{equation*}
\end{definition}
It will be shown in Section \ref{sec: Approximation results of VPNets} that any source-free flow map can be approximated by LA-VPNets.

\section{Approximation results of VPNets}\label{sec: Approximation results of VPNets}
The attention in this section will be addressed to the approximation theorem. To begin with, we introduce some notations. Consider a differential equation
\begin{equation}\label{eq:ODE2}
\frac{d}{dt}y(t) = f(t, y(t)),\quad y(\tau)=x, \tau\geq 0,
\end{equation}
where $y(t) \in \R^D$, $f \in C^{1}([0,+\infty)\times \R^D)$. For a given time step $T\geq 0$, $y(\tau+T)$ could be regarded as a function of its initial condition $x$. We denote $\phi_{\tau,T,f}(x) := y(\tau+T)$, which is known as the time-$T$ flow map of the dynamical system (\ref{eq:ODE2}). We also write the collection of such flow maps as
\begin{equation*}
    \mathcal{F}(U) = \left\{\phi_{\tau, T, f}: U \rightarrow \R^D \ \big|\ \tau,T\geq0,\ f \in C^{1}([0,+\infty)\times \R^D) \right\}.
\end{equation*}
In particular, we denote the set of measure-preserving flow maps as
\begin{equation*}
    \mathcal{VF}(U) = \left\{\phi_{\tau, T, f}\in \mathcal{F}(U)\ \big|\ \dive_y f=0\right\}.
\end{equation*}
We will work with $C$ norm and denote the norm of map $F$ as
\begin{equation*}
\norm{F}_{U}=\max_{1 \leq d \leq D}\sup_{x\in U} \lvert F_d(x) \rvert.
\end{equation*}
Now, the approximation theorems are given as follows.
\begin{theorem}[Approximation theorem for R-VPNets]\label{thm:R}
Given a compact set $U\subset \R^{D}$ and a volume-preserving flow map $\phi \in \mathcal{VF}(U)$, for any $\epsilon>0$, there exists $\psi \in \Psi_{R}$ such that $\norm{\phi - \psi}_{U}<\epsilon$. Here, $\Psi_{R}$ is the set of R-VPNets.
\end{theorem}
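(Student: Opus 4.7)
The plan is to reduce Theorem \ref{thm:R} to the approximation theorem for NICE-type coupling networks proved by the authors in \cite{zhu2022approximation}, together with the standard universal approximation theorem for single hidden layer networks. The reduction is natural because a residual module $\mathcal{R}^{i:j}$ is precisely a NICE affine coupling layer in which the block $x[i:j]$ is shifted by a function (here $\hat{\sigma}_\theta$) of the remaining coordinates, so the collection $\Psi_R$ sits inside the class of compositions considered in that previous work once the shift functions are drawn from the shallow-network family.

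First I would note that, since $f \in C^1$ and $\dive f = 0$, Liouville's theorem together with the ODE flow theorem guarantee that $\phi = \phi_{\tau,T,f}$ is a $C^1$ volume-preserving diffeomorphism on a neighborhood of $U$; this places $\phi$ inside the class of maps handled by the previous result. Invoking that result, I would obtain a finite composition
\begin{equation*}
\widetilde{\psi} = \widetilde{u}_N \circ \cdots \circ \widetilde{u}_1, \qquad \widetilde{u}_n(x) = \begin{pmatrix} x[\,:i_n] \\ x[i_n:j_n] + g_n(\overline{x[i_n:j_n]}) \\ x[j_n:\,] \end{pmatrix},
\end{equation*}
of NICE-style couplings with continuous $g_n$, satisfying $\norm{\phi - \widetilde{\psi}}_U < \epsilon/2$. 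If the decomposition in \cite{zhu2022approximation} uses a more general splitting than the contiguous block $[i:j]$, a preparatory conjugation by a volume-preserving permutation (itself realizable by a short product of residual modules) reduces to the contiguous case used here.

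Next, for each $n$ I would apply the universal approximation theorem to pick a shallow network $\hat{\sigma}_{\theta_n}$ of the form required in the definition of $\mathcal{R}^{i_n:j_n}$ that approximates $g_n$ uniformly on the compact set of inputs this layer actually sees, within tolerance $\delta_n$. Setting $u_n := \mathcal{R}^{i_n:j_n}$ with parameters $\theta_n$ and $\psi := u_N \circ \cdots \circ u_1 \in \Psi_R$, a telescoping estimate using the Lipschitz constants of the maps $\widetilde{u}_n$ on bounded sets yields $\norm{\widetilde{\psi} - \psi}_U < \epsilon/2$, and the triangle inequality finishes the proof.

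The main obstacle I expect is the bookkeeping of compact sets through the composition: after the first few approximate layers the image has moved, and one must guarantee that the universal approximation estimate for $g_n$ holds on the set actually seen by the $n$-th approximate layer, not merely by the exact one. I would handle this by fixing in advance an enlarged compact $\widetilde{U}$ containing the orbits of $U$ under both $\widetilde{\psi}$ (partially composed) and the candidate $\psi$, then choosing the tolerances $\delta_n$ backwards from $n=N$ to $n=1$, each scaled by the product of Lipschitz constants of the subsequent $\widetilde{u}_k$, so that the propagated error remains below $\epsilon/2$ regardless of the order in which errors accumulate.
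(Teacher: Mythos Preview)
Your plan is essentially the paper's own strategy: invoke the NICE-type approximation result from \cite{zhu2022approximation} to decompose $\phi$ into couplings with generic shift functions, then replace each shift by a shallow network via the universal approximation theorem and control the composed error by a Lipschitz telescoping argument (this last piece is exactly Lemma~\ref{lem:comerror}). So the architecture of your argument matches the paper's.

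There is, however, one step you skip that the paper treats as essential. You assert that ``$\phi$ is a $C^1$ volume-preserving diffeomorphism \dots\ this places $\phi$ inside the class of maps handled by the previous result.'' But the lemma from \cite{zhu2022approximation} that the paper invokes (their lemma~6) is stated for \emph{smooth} flows, not merely $C^1$ ones; the paper says explicitly that ``it remains to bridge the gap between smooth flow and $C^1$ flow.'' The paper closes this gap by first approximating the $C^1$ vector field $f$ uniformly on the orbit set $V=\{\varphi_{\tau,t,f}(x):x\in U,\ \tau\le t\le T\}$ by a smooth (sigmoid) network $u$, and then applying Gr{\"o}nwall's inequality to obtain $\norm{\varphi_{\tau,T,f}-\varphi_{\tau,T,u}}_U\le \varepsilon\,(e^{TL}-1)/L$. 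Only after this smoothing does the previous paper's lemma apply. Your proposal would need to insert this vector-field smoothing and Gr{\"o}nwall step before invoking \cite{zhu2022approximation}.

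Two minor remarks. First, the detour through permutations is unnecessary: the basic modules produced by \cite{zhu2022approximation} already use the two-block splits $x[\,:s]$ and $x[s:\,]$, which are special cases of your $\mathcal{R}^{1:s}$ and $\mathcal{R}^{s:D+1}$; no conjugation is needed. (Your claim that a permutation is ``realizable by a short product of residual modules'' is in any case dubious, since odd permutations have determinant $-1$.) Second, your careful backward choice of tolerances through enlarged compacts is precisely the content of Lemma~\ref{lem:comerror}, so you can simply cite it.
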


\begin{theorem}[Approximation theorem for LA-VPNets]\label{thm:LA}
Given a compact set $U\subset \R^{D}$ and a volume-preserving flow map $\phi \in \mathcal{VF}(U)$, for any $\epsilon>0$, there exists $\psi \in \Psi_{LA}$ such that $\norm{\phi - \psi}_{U}<\epsilon$. Here, $\Psi_{LA}$ is the set of LA-VPNets.
\end{theorem}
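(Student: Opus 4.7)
The plan is to reduce Theorem \ref{thm:LA} to Theorem \ref{thm:R}: since Theorem \ref{thm:R} yields an R-VPNet $\psi_R$ with $\norm{\phi - \psi_R}_U < \epsilon$, it suffices to show that every R-VPNet is realizable \emph{exactly} as an LA-VPNet. By composition, this further reduces to representing each residual module $\mathcal{R}^{i:j}$ as a composition of linear and activation modules.

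Two preliminary facts set up the reduction. First (the remark in Section \ref{sec: The VPNets architecture}), $\mathcal{M}_L$ realizes every affine volume-preserving map $x \mapsto Mx + b$ with $\det M = 1$: the classical fact that $SL(D,\R)$ is generated by the elementary shears $I + c\, e_i e_j^\top$, each of which lies in some $L^{i:j}$ of form (\ref{eq:matrix}), expresses any such $M$ as a finite product of shears of that form, while the trainable bias absorbs $b$. Second, a width-$w$ residual module factors exactly as $\mathcal{R}^{i:j} = \mathcal{R}^{i:j}_w \circ \cdots \circ \mathcal{R}^{i:j}_1$, where $\mathcal{R}^{i:j}_m$ is the rank-one module whose hidden layer retains only the $m$-th unit; the factorization is exact because each $\mathcal{R}^{i:j}_m$ leaves $\overline{x[i:j]}$ invariant and modifies only $x[i:j]$, so the contributions to $x[i:j]$ add linearly while every $\sigma$ still receives the original argument.

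The crux is to realize each rank-one module as $L_2 \circ \mathcal{A}^{i:j} \circ L_1$. Writing the target as $x \mapsto x + \iota_{i:j}\,\alpha\,\sigma(\beta^\top \overline{x[i:j]} + c)$ with $\alpha \in \R^{j-i}$, $\beta \in \R^{D-j+i}$, $c \in \R$, and $\iota_{i:j}: \R^{j-i} \hookrightarrow \R^D$ the insertion into the $[i:j]$-block, I would choose $L_1 \in \mathcal{M}_L$ to act as the identity on $x[i:j]$ and to send $\overline{x[i:j]} \mapsto A\overline{x[i:j]} + c\, e_1$ on the complementary block, where $A \in SL(D-j+i, \R)$ is any matrix with first row $\beta^\top$ (which exists whenever $\beta \neq 0$; the case $\beta = 0$ reduces to a pure bias already in $\mathcal{M}_L$). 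A cofactor expansion shows that the block-diagonal embedding of $A$ into $\R^{D\times D}$ has determinant $1$, so $L_1 \in \mathcal{M}_L$ by the first preliminary fact. The first coordinate of $\overline{(L_1 x)[i:j]}$ is then $\beta^\top \overline{x[i:j]} + c$, so taking $\mathcal{A}^{i:j}$ with parameter $a \in \R^{(j-i)\times(D-j+i)}$ supported only in its first column, equal to $\alpha$, adds exactly $\iota_{i:j}\,\alpha\,\sigma(\beta^\top \overline{x[i:j]} + c)$ to $L_1 x$. Finally $L_2 := L_1^{-1} \in \mathcal{M}_L$ restores $\overline{x[i:j]}$; because $L_1$ fixes $x[i:j]$, the inverse preserves the newly inserted nonlinear term, and the composition equals $\mathcal{R}^{i:j}_m$ exactly. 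Merging adjacent linear modules (which is allowed since $\mathcal{M}_L$ is closed under composition) arranges the full composition into the alternating $l\circ a\circ l\circ\cdots$ structure of an LA-VPNet.

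The main obstacle is Step~3: enforcing simultaneously that $L_1$ routes $\beta^\top \overline{x[i:j]} + c$ into exactly one coordinate of $\overline{(L_1 x)[i:j]}$, acts as the identity on $x[i:j]$, and has determinant $1$ while being expressible as a product of shears of form (\ref{eq:matrix}). This rests on both the block-diagonal determinant computation and on the $SL(D,\R)$-generation fact of Step~1, which is the most delicate algebraic ingredient; the remaining combinatorial bookkeeping (handling the $\beta=0$ degeneracy and the coordinate-splitting across all modules of the R-VPNet) is routine.
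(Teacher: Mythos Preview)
Your overall plan coincides with the paper's: both deduce Theorem~\ref{thm:LA} from Theorem~\ref{thm:R} by showing that every R-VPNet lies in (the closure of) $\Psi_{LA}$. The paper packages this as Lemma~\ref{lem:la}, proving only $\Psi_R \subset \overline{\Psi_{LA}}^U$, and then combines with Theorem~\ref{thm:R}.

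Where you diverge is in the mechanics of that inclusion. The paper does \emph{not} split the residual module into rank-one pieces; instead it groups the hidden units into square blocks $K_m \in \R^{(D-j+i)\times(D-j+i)}$, and sandwiches each block between linear maps $L_m, S_m$ that act as $K_m$ on the complement and as a compensating $K_m'$ (with $\det K_m' \det K_m = 1$) on $x[i:j]$. It then invokes its own Lemmas~\ref{lem:facA} and~\ref{lem:facB} to factor those linear maps through the $L^{i:j}$ family. Because Lemma~\ref{lem:facA} is only an approximation statement (its ordered factorization $P^{1,2}\cdots P^{d-1,d}$ needs a perturbation when a pivot vanishes), and because the block decomposition requires padding with zeros and passing to nonsingular $K_m$ by density, the paper only lands in the closure $\overline{\Psi_{LA}}^U$. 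Your route---rank-one splitting plus the classical fact that $SL(D,\R)$ is generated by transvections $I + c\, e_p e_q^\top$, each of which lies in some $L^{p:p+1}$---is more elementary and yields the strictly stronger conclusion $\Psi_R \subset \Psi_{LA}$, bypassing Lemmas~\ref{lem:facA}--\ref{lem:facB} entirely.

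One small gap to patch: your claim that some $A \in SL(D-j+i,\R)$ has first row $\beta^\top$ whenever $\beta \neq 0$ fails when $D-j+i = 1$, since $SL(1,\R)=\{1\}$ forces $\beta = 1$. The fix is to drop the requirement that $L_1$ act as the identity on $x[i:j]$: scale the one-dimensional complement by $\beta$ and compensate by scaling a single coordinate of $x[i:j]$ by $1/\beta$ (so $\det L_1 = 1$ still), then absorb the resulting mismatch into the parameter $a$ of $\mathcal{A}^{i:j}$ before undoing with $L_2$. This is precisely the role played by the paper's compensating block $K_m'$, and with it your exact-realization argument goes through.
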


\subsection{Proofs}
To complete the proofs, we first demonstrate the theory of LA-VPNet and start with the following auxiliary lemma. In this section, in order to simplify the subscript, we denote
\begin{equation*}
L^i = \left\{ S \in \R^{D \times D}\ \Bigg|\ S = \begin{pmatrix} I_{(i-1) \times (i-1)} & 0_{(i-1) \times 1} & 0_{(i-1) \times (D-i)} \\ U_{1 \times (i-1)} & 1 & V_{1 \times (D-i)} \\ 0_{(D-i) \times (i-1)} & 0_{(D-i) \times 1} & I_{(D-i) \times (D-i)} \end{pmatrix}\ \right\}
\end{equation*}
for $i=1,\cdots, D$ and let
\begin{equation*}
L^{i,i+1} = \left\{ P \in \R^{D \times D}\ \Bigg|\ \det P =1,\ P = \begin{pmatrix} I_{(i-1) \times (i-1)} & &0_{(i-1) \times (D-i+1)} \\ \ & U_{2 \times D} & \\0_{(D-i-1) \times (i+1)} & \ & I_{(D-i-1) \times (D-i-1)} \end{pmatrix}\ \right\}
\end{equation*}
with $i=1,\cdots, D-1$. Here, the subscript indicates the shape of matrices and $I_{s \times s}$ is the $s$-by-$s$ identity matrix.

\begin{lemma}\label{lem:comerror}
We assume that $\varphi^1, \cdots,\varphi^N$ are some functions from $\R^D$ to $\R^D$, and that $\varphi_k$ is Lipschitz on any compact set for $1\leq k \leq N$. If $\varphi^k \in \overline{\Psi}^{U}$ holds on any compact $U$ for $1\leq k \leq N$, then $\varphi^N\circ \cdots \circ \varphi^1 \in \overline{\Psi}^U$ holds on any compact $U$. Here, $\overline{\Psi}^U$ denotes the closure of $\Psi$ in $C(U)$ where $\Psi = \Psi_{LA}$ or $\Psi = \Psi_{R}$.
\end{lemma}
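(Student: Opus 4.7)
My plan is to proceed by induction on $N$. The base case $N=1$ is exactly the hypothesis $\varphi^1 \in \overline{\Psi}^U$. For the inductive step, set $\Phi := \varphi^{N-1} \circ \cdots \circ \varphi^1$; the inductive hypothesis gives $\Phi \in \overline{\Psi}^U$ for every compact $U$, so it suffices to show that $\varphi^N \circ \Phi$ also lies in $\overline{\Psi}^U$. Fix a compact $U$ and $\epsilon > 0$; I must produce $\psi \in \Psi$ with $\norm{\varphi^N \circ \Phi - \psi}_U < \epsilon$.

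First I would introduce a compact ``buffer'' $V$ around the image $\Phi(U)$, for example $V := \{y \in \R^D : \mathrm{dist}(y, \Phi(U)) \le 1\}$, which is compact because $\Phi(U)$ is. On $V$ the map $\varphi^N$ has some finite Lipschitz constant $L$. Using the inductive hypothesis, I would pick $\tilde\psi \in \Psi$ with $\norm{\Phi - \tilde\psi}_U < \min\{1,\ \epsilon/(2L)\}$; the first bound ensures $\tilde\psi(U) \subset V$, which is what makes the Lipschitz estimate on $V$ applicable. Using the hypothesis on $\varphi^N$ applied to the compact set $V$, I would then pick $\psi_N \in \Psi$ with $\norm{\varphi^N - \psi_N}_V < \epsilon/2$. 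A one-line telescoping estimate then yields
\begin{equation*}
\norm{\varphi^N \circ \Phi - \psi_N \circ \tilde\psi}_U \le \norm{\varphi^N \circ \Phi - \varphi^N \circ \tilde\psi}_U + \norm{\varphi^N \circ \tilde\psi - \psi_N \circ \tilde\psi}_U \le L \cdot \tfrac{\epsilon}{2L} + \tfrac{\epsilon}{2} = \epsilon,
\end{equation*}
where the first term uses the Lipschitz bound on $V$ (valid because $\Phi(U), \tilde\psi(U) \subset V$) and the second term uses $\tilde\psi(U) \subset V$ together with the uniform estimate on $V$.

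It remains to verify that $\psi_N \circ \tilde\psi$ belongs to $\Psi$, and this is the step I would flag as the main obstacle, since the Lipschitz/telescoping argument above is routine. For $\Psi_R$ the concatenation of two R-VPNets is itself a composition of residual modules, so the claim is immediate. For $\Psi_{LA}$ the strict alternation $l \circ a \circ l \circ \cdots \circ a \circ l$ in the definition of an LA-VPNet must be preserved; I would observe that $\mathcal{M}_L$ is closed under composition, since if $S = \prod_m S_m$ and $S' = \prod_{m'} S'_{m'}$ are products of matrices drawn from $\cup_{i,j} L^{i:j}$, then $S' S$ is again such a product and biases merely add. Hence the two consecutive linear modules that meet when one concatenates $l_{N+1} \circ a_N \circ \cdots \circ l_1$ with $l'_{M+1} \circ \cdots \circ a'_1 \circ l'_1$ can be merged into a single linear module, restoring the required alternation and exhibiting $\psi_N \circ \tilde\psi$ as a genuine LA-VPNet. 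The lemma then follows by induction on $N$.
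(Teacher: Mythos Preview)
Your argument is correct and is precisely the standard induction/Lipschitz/compact-image argument that underlies this lemma; the paper does not spell it out either, merely noting that the proof is a direct extension of Lemma~1 in \cite{zhu2022approximation}. Your explicit verification that $\Psi_R$ and $\Psi_{LA}$ are closed under composition (in particular, the observation that two adjacent linear modules merge into one because $\mathcal{M}_L$ is closed under composition) is a detail the paper leaves implicit, and it is exactly what is needed to make the induction go through.
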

\begin{proof}
The proof is a direct extension of the proof of lemma 1 in \cite{zhu2022approximation}.
\end{proof}
Next we study the approximation of the linear modules, and aim to show that any matrix with determinant of $1$ can be decomposed into the product of the elements in $\cup_{i=1}^{D} \cup_{j=i+1}^{D+1} L^{i:j}$.
\begin{lemma}\label{lem:facA}
For $2\leq d \leq D$, given matrix $A^1 \in \R^{d\times d}$ and $A^2 \in \R^{d\times (D-d)}$, if the determinant of $A^1$ is $1$, then for any $\varepsilon>0$, there exist $P^{i,i+1} \in L^{i,i+1}$ for $i = 1,\cdots, d-1$ such that
\begin{equation*}
\norm{\begin{pmatrix}  A^1 & A^2 \\0_{(D-d) \times d} &I_{(D-d) \times (D-d)}\end{pmatrix}-P^{1,2}P^{2,3}\cdots P^{d-2,d-1}P^{d-1,d}}<\varepsilon .
\end{equation*}
\end{lemma}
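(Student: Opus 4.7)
The plan is to induct on $d$, peeling off the rightmost factor $P^{d-1,d}$ at each step to reduce the determinant-one condition from dimension $d$ to dimension $d-1$. The approximation (rather than an exact factorization) is only needed to circumvent a degenerate configuration in $A^1$, which can be removed by a small perturbation inside the set of determinant-one matrices.

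The base case $d=2$ is immediate: the matrix $\begin{pmatrix} A^1 & A^2 \\ 0 & I \end{pmatrix}$ already belongs to $L^{1,2}$, because $\det A^1 = 1$ is precisely the determinant constraint defining this set. For the inductive step, first perturb $A^1$ within the determinant-one variety to $\hat A^1$ with $\hat A^1[d,d]\neq 0$ and $\norm{A^1-\hat A^1}$ arbitrarily small; concretely, if $A^1[d,d]=0$, pick any $j$ with $A^1[j,d]\neq 0$ (such a $j$ exists since column $d$ of $A^1$ cannot vanish) and add $\delta$ times row $j$ to row $d$, which preserves $\det=1$ and makes the $(d,d)$ entry equal to $\delta A^1[j,d]\neq 0$. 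Let $\hat M = \begin{pmatrix} \hat A^1 & A^2 \\ 0 & I \end{pmatrix}$. Now define $P^{d-1,d}$ by setting its row $d$ equal to $(\hat A^1[d,:], A^2[d,:])$, letting its row $d-1$ have the single nonzero entry $1/\hat A^1[d,d]$ in column $d-1$, and taking all remaining rows to be the corresponding identity rows. A direct computation shows the $2\times 2$ minor of $P^{d-1,d}$ in rows $d-1, d$ and columns $d-1, d$ has determinant exactly $1$, so $P^{d-1,d} \in L^{d-1,d}$.

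The crucial observation is that $N := \hat M \,(P^{d-1,d})^{-1}$ again has the block form $\begin{pmatrix} \tilde A^1 & \tilde A^2 \\ 0 & I_{D-d+1} \end{pmatrix}$ with $\tilde A^1 \in \R^{(d-1)\times(d-1)}$ and $\det \tilde A^1 = 1$. Indeed, row $d$ of $\hat M$ coincides with row $d$ of $P^{d-1,d}$ by construction, so right-multiplication by $(P^{d-1,d})^{-1}$ sends it to the standard basis vector in position $d$; rows $d+1,\ldots,D$ of both $\hat M$ and $(P^{d-1,d})^{-1}$ are standard basis rows and are preserved under the product; and $\det N = \det \hat M/\det P^{d-1,d}=1$. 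Applying the inductive hypothesis to $N$ produces approximants $Q^{1,2},\ldots,Q^{d-2,d-1}$ with $\norm{N - Q^{1,2}\cdots Q^{d-2,d-1}}$ as small as desired, and right-multiplying by $P^{d-1,d}$ yields the required decomposition of $\hat M$; continuity of matrix multiplication then converts this tolerance into the target $\varepsilon$-bound, and a triangle inequality against the original $M$ closes the argument.

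The main technical obstacle is controlling the norm of $P^{d-1,d}$ (and of its inverse), both of which contain the factor $1/\hat A^1[d,d]$, so the perturbation must be chosen so that $\hat A^1[d,d]$ stays bounded away from zero. Accordingly the order of parameter choices is: pick $\delta$ first so that both $|\hat A^1[d,d]|$ and $\norm{\hat M - M}$ are controlled, then apply the inductive hypothesis with a tolerance scaled by $\norm{P^{d-1,d}}^{-1}$, and finally combine via the triangle inequality. This balancing is routine but is the only place in the argument requiring real care.
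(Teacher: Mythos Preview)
Your proposal is correct and follows essentially the same approach as the paper: induct on $d$, build $P^{d-1,d}$ so that its row $d$ agrees with the (possibly perturbed) row $d$ of the target matrix, factor it off on the right, and apply the inductive hypothesis to the remaining $(d-1)$-block. The only difference is in how the degeneracy $A^1[d,d]=0$ is removed---the paper adds $\delta$ directly to that single entry, whereas you add a multiple of another row so as to keep $\det A^1=1$ exactly; your variant makes the application of the inductive hypothesis marginally cleaner, but the argument is otherwise the same.
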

\begin{proof}
We prove this by induction on $2\leq d \leq D$. To begin with, the case when $d=2$ is obvious. In addition, suppose now that the conclusion holds for $d-1$. For any $0<\varepsilon<1$, let
\begin{equation*}
p=(A^1[d,\ :d],A^1[d,d]+\delta,A^2[d,\ :\ ]) \in \R^{1\times D},
\end{equation*}
and we can choose $\delta$ and row vector $q$ such that
\begin{equation*}
P^{d-1,d} = \begin{pmatrix} I_{(d-2) \times (d-2)} & &0_{(d-2) \times (D-d+2)} \\ \ & q & \\\ & p & \\0_{(D-d) \times d} & \ & I_{(D-d) \times (D-d)} \end{pmatrix} \in L^{d-1,d}.
\end{equation*}
Add $\delta$ to $A^1[d,d]$ and denote the new matrix as $\tilde{A}^1$, then it is easy to check that there exist $B^1 \in \R^{(d-1)\times (d-1)}$ and $B^2 \in \R^{(d-1)\times (D-d+1)}$, such that
\begin{equation*}
\begin{pmatrix}  \tilde{A}^1 & A^2 \\0_{(D-d) \times d} &I_{(D-d) \times (D-d)}\end{pmatrix} = \begin{pmatrix}  B^1 & B^2 \\0_{(D-d+1) \times (d-1)} &I_{(D-d+1) \times (D-d+1)}\end{pmatrix}P^{d-1,d}.
\end{equation*}
By induction, for any $\tilde{\varepsilon}>0$, there exist $P^{i,i+1} \in L^{i,i+1}$ for $i = 1,\cdots, d-2$ such that
\begin{equation*}
\norm{\begin{pmatrix}  B^1 & B^2 \\0_{(D-d+1) \times (d-1)} &I_{(D-d+1) \times (D-d+1)}\end{pmatrix}-P^{1,2}P^{2,3}\cdots P^{d-2,d-1}}<\tilde{\varepsilon} .
\end{equation*}
Consequently,
\begin{equation*}
\norm{\begin{pmatrix}  \tilde{A}^1 & A^2 \\0_{(D-d) \times d} &I_{(D-d) \times (D-d)}\end{pmatrix} -P^{1,2}P^{2,3}\cdots P^{d-2,d-1}P^{d-1,d}}<\norm{P^{d-1,d}}\tilde{\varepsilon}.
\end{equation*}

If $A^1[d,d] \neq 0$, taking $\delta=0$ results in $\norm{P^{d-1,d}}$ being a constant depending on $A_1,A_2$. Thus taking $\tilde{\varepsilon} = \varepsilon$ concludes the induction. Otherwise, from the definition of $P^{d-1,d}$, we derive that $\norm{P^{d-1,d}}< C_1+\frac{C_2}{\delta}$, where $C_1,C_2$ are constants depending on $A_1,A_2$. Taking $\tilde{\varepsilon} = \varepsilon \cdot \delta$ and $\delta = \varepsilon$ completes the induction and hence the proof.
\end{proof}

\begin{lemma}\label{lem:facB}
For $i=1,\cdots, D-1$, given $P^{i,i+1} \in \R^{D\times D} \in L^{i,i+1}$, there exist $S^{i+1}_{1}, S^{i+1}_{2}, S^{i}_{1}, S^{i}_{2}$ and $S^i_m \in L^i$ for $m=1,2$ as well as $T^{i+1}_{1}, T^{i+1}_{2}, T^{i}_{1}, T^{i}_{2}$ and $T^i_m \in L^i$ for $m=1,2$ such that
\begin{equation*}
P^{i,i+1} = S^{i+1}_{1} S^{i}_{1} S^{i+1}_{2} S^{i}_{2} = T^{i}_{1} T^{i+1}_{1} T^{i}_{2} T^{i+1}_{2}.
\end{equation*}
\end{lemma}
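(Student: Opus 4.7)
The plan is to match coefficients directly, exploiting that each $P^{i,i+1} \in L^{i,i+1}$ is determined by its action on the ``active'' pair $(x_i, x_{i+1})$ together with the linear ``wing'' couplings to the remaining $D-2$ coordinates; the four-factor decomposition turns out to be just rich enough in parameters to match this data.

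I would first parameterize. Writing $x = (y, z, r)$ with $y = x_i$, $z = x_{i+1}$, $r \in \R^{D-2}$, each factor in $L^i$ acts as $(y,z,r) \mapsto (y + \alpha z + w^\top r,\, z,\, r)$ for a scalar $\alpha$ and vector $w \in \R^{D-2}$; each factor in $L^{i+1}$ acts as $(y,z,r) \mapsto (y,\, z + \gamma y + u^\top r,\, r)$ for a scalar $\gamma$ and vector $u \in \R^{D-2}$; and $P^{i,i+1}$ acts as $y \mapsto ay + bz + p^\top r$, $z \mapsto cy + dz + q^\top r$ with $ad - bc = 1$ forced by $\det P^{i,i+1}=1$. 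Composing $S^{i+1}_1 S^i_1 S^{i+1}_2 S^i_2$ by applying the rightmost factor first then yields four scalar equations for the central parameters $\alpha_1,\alpha_2,\gamma_1,\gamma_2$, together with two vector equations for the wing parameters $w_1,w_2,u_1,u_2$.

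The key observation is that the value of $ad - bc$ implied by the first three central-block equations simplifies identically to $1$, so the fourth equation is automatic. This leaves three equations in four scalar unknowns, which I would solve by a case split on $a$: generically, when $a \notin \{0,1\}$, take $\alpha_1 = a - 1$, $\gamma_2 = 1$, $\alpha_2 = (b - a + 1)/a$, $\gamma_1 = (c - 1)/a$. The degenerate case $a = 1$ forces $\alpha_1 \gamma_2 = 0$ and requires a separate choice such as $\alpha_1 = 0$ with $\alpha_2 = b$ and any split $\gamma_2 + \gamma_1 = c$; the case $a = 0$ (which forces $bc = -1$) is handled by $\alpha_1 = b$, $\gamma_2 = c$. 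Once the central scalars are fixed, setting $w_2 = u_2 = 0$ reduces the two wing equations to $w_1 = p$ and $u_1 = q - \gamma_1 p$.

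For the second decomposition $P^{i,i+1} = T^i_1 T^{i+1}_1 T^i_2 T^{i+1}_2$, I would invoke the transposition symmetry. Let $\sigma$ be the permutation matrix that swaps coordinates $i$ and $i+1$. Direct verification gives $\sigma L^i \sigma = L^{i+1}$, $\sigma L^{i+1} \sigma = L^i$, and $\sigma L^{i,i+1} \sigma = L^{i,i+1}$. Applying the first decomposition to $\sigma P^{i,i+1} \sigma$ and conjugating back by $\sigma$ swaps the role of every factor, converting the $L^{i+1}, L^i, L^{i+1}, L^i$ left-to-right pattern into the required $L^i, L^{i+1}, L^i, L^{i+1}$ pattern. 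The main obstacle throughout is the degenerate $a = 1$ branch, since $\alpha_1 \gamma_2 = 0$ breaks the generic parameterization; the saving grace is the automatic consistency of the $d$-equation, which stems from $\det P^{i,i+1} = 1$ and survives this degenerate choice.
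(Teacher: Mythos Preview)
Your argument is correct; the central-block equations you derive are right, the case split covers all of $SL_2(\R)$, and the choice $w_2=u_2=0$ does collapse the wing equations to $w_1=p$, $u_1=q-\gamma_1 p$. The permutation-conjugation trick for the second ordering is also sound, since $\sigma L^i\sigma = L^{i+1}$ and $\sigma L^{i,i+1}\sigma = L^{i,i+1}$ as you claim.

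The paper's proof follows the same overall shape---isolate the $2\times2$ central block $U_0$, factor it, then absorb the wings---but handles the two ingredients differently. For the $SL_2$ factorization of $U_0$ it simply quotes an existing result (from \cite{jin2020unit}) rather than splitting into the cases $a\notin\{0,1\}$, $a=1$, $a=0$; your direct computation is more self-contained but slightly longer. For the second ordering the paper applies the first decomposition to $(P^{i,i+1})^{-1}$ and then inverts, using that each $L^i$ is closed under inversion; your permutation trick achieves the same reversal of the $L^{i+1},L^i,L^{i+1},L^i$ pattern by a symmetry argument instead. Both devices are equally short, so neither approach dominates---yours avoids an external citation, the paper's avoids the case analysis.
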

\begin{proof}
We omit the zero elements and rewrite $P^{i,i+1}$ as
\begin{equation*}
P^{i,i+1} = \begin{pmatrix} I_{(i-1) \times (i-1)} & \ & \ \\ \begin{pmatrix}U_{11} \\ U_{21} \end{pmatrix} & U_0 & \begin{pmatrix}U_{12} \\ U_{22} \end{pmatrix} \\\ \ & \ & I_{(D-i-1) \times (D-i-1)} \end{pmatrix},
\end{equation*}
where $U_0 \in \R^{2 \times 2},\ U_{11}, U_{21} \in \R^{1 \times (i-1)}, \ U_{12}, U_{22} \in \R^{1 \times (D-i-1)}$. The fact that the determinant of $P^{i,i+1}$ equals to $1$ implies that the determinant of $U_0$ is 1. Thus, this results in $U_0$ being a symplectic matrix since $U_0 \in \R^{2 \times 2}$. By \cite{jin2020unit}, there exist $a,b,c,d \in \R$ such that
\begin{equation*}
U_0=\begin{pmatrix} 1 & 0\\a &1 \end{pmatrix}\begin{pmatrix} 1 & b\\0 &1 \end{pmatrix}\begin{pmatrix} 1 & 0\\c &1 \end{pmatrix}\begin{pmatrix} 1 & d\\0 &1 \end{pmatrix}.
\end{equation*}
Taking
\begin{equation*}
\begin{aligned}
&S^{i}_{1} = \begin{pmatrix} I_{(i-1) \times (i-1)} & \ & \ \\ \begin{pmatrix}U_{11} \\0 \end{pmatrix} & \begin{pmatrix} 1 & b\\0 &1 \end{pmatrix} & \begin{pmatrix} U_{12} \\ 0 \end{pmatrix} \\\ \ & \ & I_{(D-i-1) \times (D-i-1)} \end{pmatrix},\quad S^{i}_{2} = \begin{pmatrix} I_{(i-1) \times (i-1)} & \ & \ \\ 0 & \begin{pmatrix} 1 & d\\0 &1 \end{pmatrix} & 0 \\\ \ & \ & I_{(D-i-1) \times (D-i-1)} \end{pmatrix}
\end{aligned}
\end{equation*}
and
\begin{equation*}
\begin{aligned}
&S^{i+1}_{1} = \begin{pmatrix} I_{(i-1) \times (i-1)} & \ & \ \\ \begin{pmatrix} 0 \\ U_{21}-a U_{11} \end{pmatrix} & \begin{pmatrix} 1 & 0\\a &1 \end{pmatrix} & \begin{pmatrix} 0\\ U_{22}-a U_{12} \end{pmatrix} \\\ \ & \ & I_{(D-i-1) \times (D-i-1)} \end{pmatrix},\quad S^{i+1}_{2} = \begin{pmatrix} I_{(i-1) \times (i-1)} & \ & \ \\ 0 & \begin{pmatrix} 1 & 0\\c &1 \end{pmatrix} & 0 \\\ \ & \ & I_{(D-i-1) \times (D-i-1)}, \end{pmatrix},
\end{aligned}
\end{equation*}
we can readily check that
\begin{equation*}P^{i,i+1} = S^{i+1}_{1} S^{i}_{1}S^{i+1}_{2} S^{i}_{2}.
\end{equation*}
Finally, expressing $(P^{i,i+1})^{-1}$ in the above approach implies
\begin{equation*}
P^{i,i+1} = T^{i}_{1} T^{i+1}_{1} T^{i}_{2} T^{i+1}_{2},
\end{equation*}
where $T^i_m \in L^i$ for $m=1,2$.
\end{proof}
From Lemma \ref{lem:facA} and \ref{lem:facB}, we know that the proposed linear module $\mathcal{M}_L$ can approximate all the linear volume-preserving maps. With this result, we are able to present the following lemma.
\begin{lemma}\label{lem:la}
Given compact $U \subset \R^D$, we have $\Psi_{R} \subset \overline{\Psi_{LA}}^U$, where $\overline{\Psi_{LA}}^U$ denotes the closure of $\Psi_{LA}$ in $C(U)$.
\end{lemma}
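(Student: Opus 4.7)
The plan is to apply Lemma \ref{lem:comerror} in three tiers: first to reduce R-VPNets to single residual modules, then to reduce each residual module to atomic single-neuron single-coordinate updates, and finally to realize each atomic update as an exact conjugation $L^{-1}\circ\mathcal{A}\circ L$ of a single activation module by an affine volume-preserving change of coordinates.

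For the middle tier, I would write $\mathcal{R}^{i:j}=\phi_{w,j-i-1}\circ\cdots\circ\phi_{1,0}$, where
\begin{equation*}
\phi_{k,l}(x):=x+a_{l+1,k}\,e_{i+l}\,\sigma(K_{k,:}\overline{x[i:j]}+b_k),\quad 1\le k\le w,\ 0\le l\le j-i-1,
\end{equation*}
and $e_m$ is the $m$-th standard basis vector. Each $\phi_{k,l}$ alters only coordinates lying in $[i,j)$, while the argument of every activation depends solely on $\overline{x[i:j]}$; consequently the updates accumulate additively under composition and reproduce $\mathcal{R}^{i:j}$ exactly. It therefore suffices to place every atomic
\begin{equation*}
\phi(x)=x+c\, e_{i_0}\,\sigma(K\overline{x[i:j]}+b),\quad i_0\in[i,j),
\end{equation*}
into $\overline{\Psi_{LA}}^U$ for arbitrary compact $U$.

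For the innermost tier, if $K=0$ or $\overline{[i,j)}$ is empty then $\phi$ is a pure translation and already belongs to $\mathcal{M}_L\subset\Psi_{LA}$. Otherwise I would pick any $l_0\in\overline{[i,j)}$ and build an affine map $L(x)=Mx+b'$ with $\det M=1$ having two features: \emph{(i)} $Me_{i_0}=\mu\,e_{i_0}$ for some $\mu\ne 0$, making $e_{i_0}$ an eigenvector of both $M$ and $M^{-1}$, and \emph{(ii)} $(Mx+b')_{l_0}=K\overline{x[i:j]}+b$, so coordinate $l_0$ of $L(x)$ carries the desired affine combination. These requirements are mutually consistent since each forces $M_{l_0,i_0}=0$; expanding $\det M$ along column $i_0$ reduces the unit-determinant condition to a requirement on a $(D-1)\times(D-1)$ minor containing one prescribed nonzero row, and the remaining rows together with $\mu$ provide enough freedom to meet it. Choosing $\mathcal{A}:=\mathcal{A}^{i_0:i_0+1}\in\mathcal{M}_A$ with activation weight $c\mu$ at position $l_0$ and zero elsewhere, a direct calculation using $M^{-1}e_{i_0}=\mu^{-1}e_{i_0}$ shows $L^{-1}\circ\mathcal{A}\circ L=\phi$ identically on $\R^D$.

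To finish, Lemmas \ref{lem:facA} and \ref{lem:facB} place both $L$ and $L^{-1}$ in $\overline{\mathcal{M}_L}^U\subset\overline{\Psi_{LA}}^U$ for every compact $U$, while $\mathcal{A}$ itself lies in $\Psi_{LA}$ once sandwiched between identity linear modules. Each factor is Lipschitz on compacta, so Lemma \ref{lem:comerror} first gives $\phi\in\overline{\Psi_{LA}}^U$; two further applications, one per remaining tier, propagate the inclusion to $\mathcal{R}^{i:j}$ and then to every $\psi_R\in\Psi_R$. I expect the main obstacle to be the algebraic construction of $M$: verifying that the one-row and one-eigenvector constraints can always be realized simultaneously with unit determinant, and cleanly handling the degenerate edge cases ($K=0$ or $j-i=D$).
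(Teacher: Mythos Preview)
Your argument is correct and rests on the same conjugation idea as the paper's proof: realize (pieces of) a residual module as $L^{-1}\circ\mathcal{A}\circ L$ with $L$ an affine volume-preserving map approximable via Lemmas~\ref{lem:facA}--\ref{lem:facB}, and then chain the pieces together with Lemma~\ref{lem:comerror}. The difference lies in the granularity of the decomposition. The paper partitions the hidden units of $\mathcal{R}^{i:j}$ into square blocks of size $D-j+i$, assumes each block's weight matrix $K_m$ is invertible, and uses the block-diagonal map $L_m=\mathrm{diag}(K_m,K_m')$ with $\det K_m\det K_m'=1$; the single activation module $\mathcal{A}^{i:j}$ then handles the whole $x[i:j]$ slice for that block at once. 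Non-square widths and singular $K_m$ are dealt with by zero-padding and the density of invertible matrices. Your route instead breaks $\mathcal{R}^{i:j}$ all the way down to rank-one atoms $\phi_{k,l}$, each touching a single output coordinate via a single neuron, and builds a bespoke $M$ satisfying one row constraint and one eigenvector constraint. This avoids the padding and density arguments at the cost of a slightly more delicate linear-algebra construction of $M$; your sketch of that construction (expand $\det M$ along column $i_0$, note the fixed row of the minor is nonzero when $K\neq0$, fill the remaining rows freely and pick $\mu$) is sound, including in the boundary case $D=2$. Both approaches yield the same conclusion with comparable effort.
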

\begin{proof}
We consider a residual module as
\begin{equation*}
u(x)= \begin{pmatrix}x[\ :i]\\ x[i:j] + a \sigma(K\overline{x[i:j]}+b)\\ x[j:\ ] \end{pmatrix},
\end{equation*}
where $K \in \R^{M (D-j+i)\times (D-j+i)}, b\in \R^{M (D-j+i)}, a\in \R^{(j-i) \times M (D-j+i)}$ and $K = (K_1^T, \cdots, K_M^T)^T$ with $ K_i \in \R^{(D-j+i) \times (D-j+i)} $ and $\det K_i \neq 0$ for $i=1, \cdots, M$.

Denote $a= (a_1^T, \cdots, a_M^T)^T,\ a_i \in \R^{(j-i) \times (D-j+i)}$ and $b= (b_1^T, \cdots, b_M^T)^T, \ b_i \in \R^{D-j+i}$. We take
\begin{equation*}
\begin{aligned}
&L_m=\begin{pmatrix} K_m[1:i,1:i]& 0_{i-1,j-i} & K_m[1:i,i:\ ]\\0_{j-i,i-1} & K_m' & 0_{j-i,D+1-j} \\ K_m[i:\ ,1:i] & 0_{D-i,j-i}& K_m[i:\ ,i:\ ]\end{pmatrix}, \quad S_m=\begin{pmatrix} K_m^{-1}[1:i,1:i]& 0_{i-1,j-i} & K_m^{-1}[1:i,i:\ ]\\0_{j-i,i-1} & K_m'^{-1} & 0_{j-i,D+1-j} \\ K_m^{-1}[i:\ ,1:i] & 0_{D-i,j-i}& K_m^{-1}[i:\ ,i:\ ]\end{pmatrix} \\
\end{aligned}
\end{equation*}
and
\begin{equation*}
A_m(x)= \begin{pmatrix}x[\ :i]\\ x[i:j] + K_m' a_m \sigma(\overline{x[i:j]})\\ x[j:\ ] \end{pmatrix}
\end{equation*}
where $K_m'\in \R^{(j-i) \times (j-i)}$ satisfies $\det K_m' \det K_m =1$ for $m=1, \cdots M$. For $m=1, \cdots, M$, we define
\begin{equation*}
\begin{aligned}
h_m &=S_m A_m\left(L_m x+\begin{pmatrix}b_m[\ :i]\\ 0_{j-i}\\b_m[i:\ ]\end{pmatrix}\right) - \begin{pmatrix}(K_m^{-1}b_m)[\ :i]\\ 0_{j-i}\\(K_m^{-1}b_m)[i:\ ]\end{pmatrix} =\begin{pmatrix} x[\ :i] \\x[i:j] + a\sigma (K_m \overline{x[i:j]} +b_m)\\ x[j:\ ]\end{pmatrix}.
\end{aligned}
\end{equation*}
Lemma \ref{lem:facA} and \ref{lem:facB} imply $h_m \in \overline{\Psi_{LA}}$. Subsequently, we can check that
\begin{equation*}
u(x) = h_M \circ \cdots \circ h_1,
\end{equation*}
and thus according to Lemma \ref{lem:comerror}, we know that $u \in \overline{\Psi_{LA}}^U$. For general residual modules, we can extend $K, a, b$ with some zero rows to meet the requirement of the width. Again, Lemma \ref{lem:comerror}, together with the fact that non-singular matrix is dense in the matrix set, we conclude the proof.
\end{proof}

Lemma \ref{lem:la} indicates that LA-VPNets do not change the approximation properties of R-VPNets and thus it is sufficient to prove Theorem \ref{thm:R}. Therefore, according to the lemma 5 in \cite{zhu2022approximation}, it remains to bridge the gap between smooth flow and $C^1$ flow to finish the main results. Next we state the well-known Gr{\"o}nwall's Inequality \cite{gronwall1919note}.
\begin{proposition}[Gr{\"o}nwall's Inequality]
Let $F:\R \rightarrow \R$ be a scalar function such that $F(t)\geq 0$ and $F(T) \leq AT + B \int_0^T F(t) dt$ with $A, B >0$. Then, $F(T) \leq A(e^{BT}-1)/B$.
\end{proposition}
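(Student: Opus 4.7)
The plan is to reduce the integral inequality on $F$ to a first-order linear differential inequality for its primitive, and then solve it with the standard integrating-factor trick. Define $G(T) := \int_0^T F(t)\, dt$. Since $F \geq 0$, the function $G$ is non-negative and non-decreasing, and (assuming $F$ is continuous, which is the regularity one actually needs in the applications of this lemma) differentiable with $G'(T) = F(T)$. Substituting $F(T) = G'(T)$ into the hypothesis yields the linear differential inequality
\[
G'(T) - B\, G(T) \leq A T, \qquad G(0) = 0.
\]

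Next I would multiply through by the integrating factor $e^{-BT}$ so that the left-hand side becomes the exact derivative $\bigl(G(T) e^{-BT}\bigr)'$. Integrating from $0$ to $T$ and using $G(0)=0$ gives
\[
G(T)\, e^{-BT} \leq \int_0^T A t\, e^{-Bt}\, dt.
\]
A single integration by parts evaluates the right-hand side explicitly to $-\frac{AT}{B} e^{-BT} + \frac{A}{B^2}\bigl(1 - e^{-BT}\bigr)$. Multiplying through by $e^{BT}$ and plugging the resulting bound for $B G(T)$ back into the original hypothesis $F(T) \leq AT + B G(T)$, the free $AT$ term cancels against a $-AT$ produced by the integration by parts, leaving precisely $F(T) \leq A(e^{BT}-1)/B$.

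The main obstacle is purely bookkeeping: keeping track of signs and factors of $B$ in the integration by parts so that the cancellation of the explicit $AT$ terms comes out clean (it is easy to be off by a constant). A secondary, essentially cosmetic, point is regularity: if one only wants to assume $F$ is non-negative and locally integrable, the argument should be run with $G$ absolutely continuous and $G'(T) \leq F(T)$ almost everywhere, which does not change anything substantive. The sharpness of the bound can be verified \emph{a posteriori} by noting that $F^*(T) := A(e^{BT}-1)/B$ realizes equality in the hypothesis, so no improvement is possible and the chosen approach cannot give away any constant.
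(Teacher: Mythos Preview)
Your argument is correct and is the standard integrating-factor proof of Gr{\"o}nwall's inequality; the bookkeeping you describe (integration by parts and the cancellation of the $AT$ term) goes through exactly as stated. The paper, however, does not supply its own proof of this proposition: it simply states the result as well known and cites Gr{\"o}nwall's original paper, so there is no proof in the paper to compare against.
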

Now, we are able to present the proofs of the main theorems.
\begin{proof}[Proof of Theorem \ref{thm:R}]
Given an compact set $U\subset \R^{D}$ and a volume-preserving flow map $\varphi_{\tau, T, f} \in \mathcal{VF}(U)$ with vector field $f$, the universal approximation theorem of neural networks with one hidden layer and sigmoid activation \cite{cybenko1989approximation,hornik1990universal} implies that for any $\varepsilon>0$, there exists a smooth neural networks $u$ such that
\begin{equation*}
\norm{u-f}_V \leq \varepsilon,
\end{equation*}
where
\begin{equation*}
V = \{\varphi_{\tau,t, f}(x)\ |\ x\in U,\ \tau\leq t\leq T\}.
\end{equation*}
Therefore, for any $x \in U$,
\begin{equation*}
\begin{aligned}
\norm{\varphi_{\tau, T, f}(x) -\varphi_{\tau, T, u}(x)} \leq & \norm{\int_{0}^T f\big(\varphi_{\tau, t, f}(x)\big) - u\big(\varphi_{\tau, t, u}(x) \big) dt}\\
\leq &\int_{0}^T \norm{f\big(\varphi_{\tau, t, f}(x)\big) - u\big(\varphi_{\tau, t, f}(x)\big)} dt + \int_{0}^T \norm{u\big(\varphi_{\tau, t, f}(x)\big) - u\big(\varphi_{\tau, t, u}(x)\big)} dt\\
\leq & \ T \cdot \varepsilon + L \int_{0}^T \norm{\varphi_{\tau, t, f}(x) - \varphi_{\tau, t, u}(x)} dt.
\end{aligned}
\end{equation*}
where $L$ is the Lipschitz constant of $U$. Applying Gr{\"o}nwall's inequality, we obtain that
\begin{equation*}
\norm{\varphi_{\tau, T, f}(x) -\varphi_{\tau, T, u}(x)} \leq \varepsilon \cdot (e^{ T L}-1)/L.
\end{equation*}
Since $u$ is smooth, according to the lemma 6 in \cite{zhu2022approximation}, there exists a VPNet $\hat{\psi}_{net}$ composed by basic modules
\begin{equation*}\label{eq:up}
\begin{aligned}
&\hat{x}[\ :s] = x[\ :s] + f_{net}(x[s:\ ]),\\ &\hat{x}[s:\ ] = x[s:\ ],
\end{aligned}
\quad \text{and}\quad
\begin{aligned}
&\hat{x}[\ :s ] = x[\ :s ],\\
&\hat{x}[s:\ ] = x[s:\ ] + f_{net}(x[\ :s ]),\\
\end{aligned}
\end{equation*}
such that $\norm{\varphi_{\tau, T, u}- \hat{\psi}_{net}}_U \leq \varepsilon/2$. Clearly, the above modules can be approximated by the composition of several residual modules defined in this paper. This fact together with Lemma \ref{lem:comerror} yields that there exists a R-VPNet $\psi_{net}$ such that
\begin{equation*}
\norm{\varphi_{\tau, T, u}- \psi_{net}}_U \leq \norm{\varphi_{\tau, T, u}- \hat{\psi}_{net}}_U + \norm{\hat{\psi}_{net}- \psi_{net}}_U \leq \varepsilon.
\end{equation*}
Finally, we conclude that
\begin{equation*}
\norm{\varphi_{\tau, T, f}- \psi_{net}}_U \leq \big( (e^{ T L}-1)/L+1\big) \cdot \varepsilon,
\end{equation*}
which completes the proof.
\end{proof}
\begin{proof}[Proof of Theorem \ref{thm:LA}]
Combining Lemma \ref{lem:la} and Theorem \ref{thm:R}, we conclude the proof.
\end{proof}

\section{Numerical results}\label{sec: Numerical results}
In this section, we show the results of the proposed VPNets on two benchmark problems. Since volume-preserving is equivalent to symplecticity-preserving in 2-dimensional systems, we investigate the learning models in higher dimensions. The code accompanying this paper are publicly available at \url{https://github.com/Aiqing-Zhu/VPNets}.

\subsection{Experiment setting}
We summarize the overall setting of all experiments in this subsection. The experiments are performed in the Python 3.6 environment. We utilize the PyTorch library for neural network implementation. Here, 5 independent experiments are simulated for both cases, and we show the results with the lowest training loss. All of the R-VPNets used in the examples are of the form
\begin{equation*}
\Psi_R =(\mathcal{R}^{D:D+1} \circ \mathcal{R}^{1:2} \circ \mathcal{R}^{D:D+1})  \circ (\mathcal{R}^{D-1:D} \circ \mathcal{R}^{D:D+1} \circ \mathcal{R}^{D-1:D}) \circ \cdots \circ (\mathcal{R}^{1:2} \circ \mathcal{R}^{2:3} \circ \mathcal{R}^{1:2})
\end{equation*}
where $D$ is the dimension of the problem and $\mathcal{R}^{i:j}$ is defined as in Section \ref{sec: The VPNets architecture} with width $w =64$. The LA-VPNets are given as
\begin{equation*}
\begin{aligned}
\Psi_{LA} =& \mathcal{L}\circ (\mathcal{A}^{D:D+1}\circ \mathcal{L} \circ \mathcal{A}^{1:2}\circ \mathcal{L} \circ \mathcal{A}^{D:D+1}\circ \mathcal{L})\\
&\circ (\mathcal{A}^{D-1:D}\circ \mathcal{L} \circ \mathcal{A}^{D:D+1}\circ \mathcal{L} \circ \mathcal{A}^{D-1:D}\circ \mathcal{L}) \circ \cdots \circ (\mathcal{A}^{1:2}\circ \mathcal{L} \circ \mathcal{A}^{2:3}\circ \mathcal{L} \circ \mathcal{A}^{1:2}\circ \mathcal{L}),
\end{aligned}
\end{equation*}
where $\mathcal{L}$ is the linear module of the form
\begin{equation*}
\mathcal{L}(x) = \left(\prod_{i=1}^D S^{i:i+1}S^{i+1:i+2}S^{i:i+1} \right) x +b.
\end{equation*}
The activation function is chosen to be sigmoid for both VPNets. The trainable parameters in the VPNets are determined via minimizing MSE loss (\ref{eq:mse}) using the Adam algorithm \cite{kingma2014adam} from the PyTorch library for both examples. The learning rate is set to decay exponentially with linearly decreasing powers, i.e., the learning rate in the $n$-th epoch denoted as $lr_n$ is given by
\begin{equation*}
    lr_n =lr_0 * d^{-n/N},
\end{equation*}
where $N$ is the total epochs, $lr_0$ is the initial learning rate and $d$ is the decay coefficient. The training parameters for each examples are summarized in Table \ref{tab:training parameters and training loss}. For convenience, we also report the corresponding training loss in Table \ref{tab:training parameters and training loss}.

\begin{table}[htbp]
    \centering
    \begin{tabular}{p{90pt}p{10pt}p{80pt}p{60pt}p{10pt}p{80pt}p{60pt}}
        \toprule
        Problem & & \multicolumn{2}{c}{Volterra equations} & & \multicolumn{2}{c}{Charged particle dynamics} \\
        \cmidrule(lr){1-1} \cmidrule(lr){3-4} \cmidrule(lr){6-7}
        Network type & & R-VPNet  &LA-VPNet & & R-VPNet  &LA-VPNet\\
        \cmidrule(lr){1-1}  \cmidrule(lr){3-4} \cmidrule(lr){6-7}
        Parameters& & 2.3K & 0.2K & & 3.8K &0.6K\\
        Initial learning rate & & 0.01 & 0.01 & & 0.001 & 0.01 \\
       Decay coefficient & &1000 & 1000 & & 100 &100\\
        Epochs & & 300000 & 300000& & 500000 & 800000\\
        Training loss& & 3.82e-9 & 5.25e- 7 & & 1.75e-8 & 1.09e-7\\
        \bottomrule
    \end{tabular}
    \caption{Training parameters and training loss.}
    \label{tab:training parameters and training loss}
\end{table}
\subsection{Volterra equations}
Consider the three-dimensional Volterra equation:
\begin{equation*}
\begin{aligned}
\frac{d p}{d t} &= p\ (q-r),\\
\frac{d q}{d t} &= q\ (r-p),\\
\frac{d r}{d t} &= r\ (p-q).\\
\end{aligned}
\end{equation*}
Two trajectories with initial conditions $y_0 =(5,4.1,5.9), (5,3.9,6.1)$ and stepsize $h= 0.01$ are simulated and the first 75 points (about one period) are used as the training set.

To investigate the performance of the proposed models, we perform predictions starting from $y_0 = (5,4,6)$, $(5.2,4,5.8)$, $(4.9,4,6.1)$ using trained VPNets. The performance is shown in Fig. \ref{fig:lv}. Although the test trajectories are far away from the training data, the proposed VPNets capture the dynamic evolution of the system perfectly. These results demonstrate that the VPNets are able to record the fine structures in the learned discrete data, and the serving algorithm correctly predicts the volume-preserving dynamics.
\begin{figure}[htbp]
     \centering
     \includegraphics[width=1\textwidth]{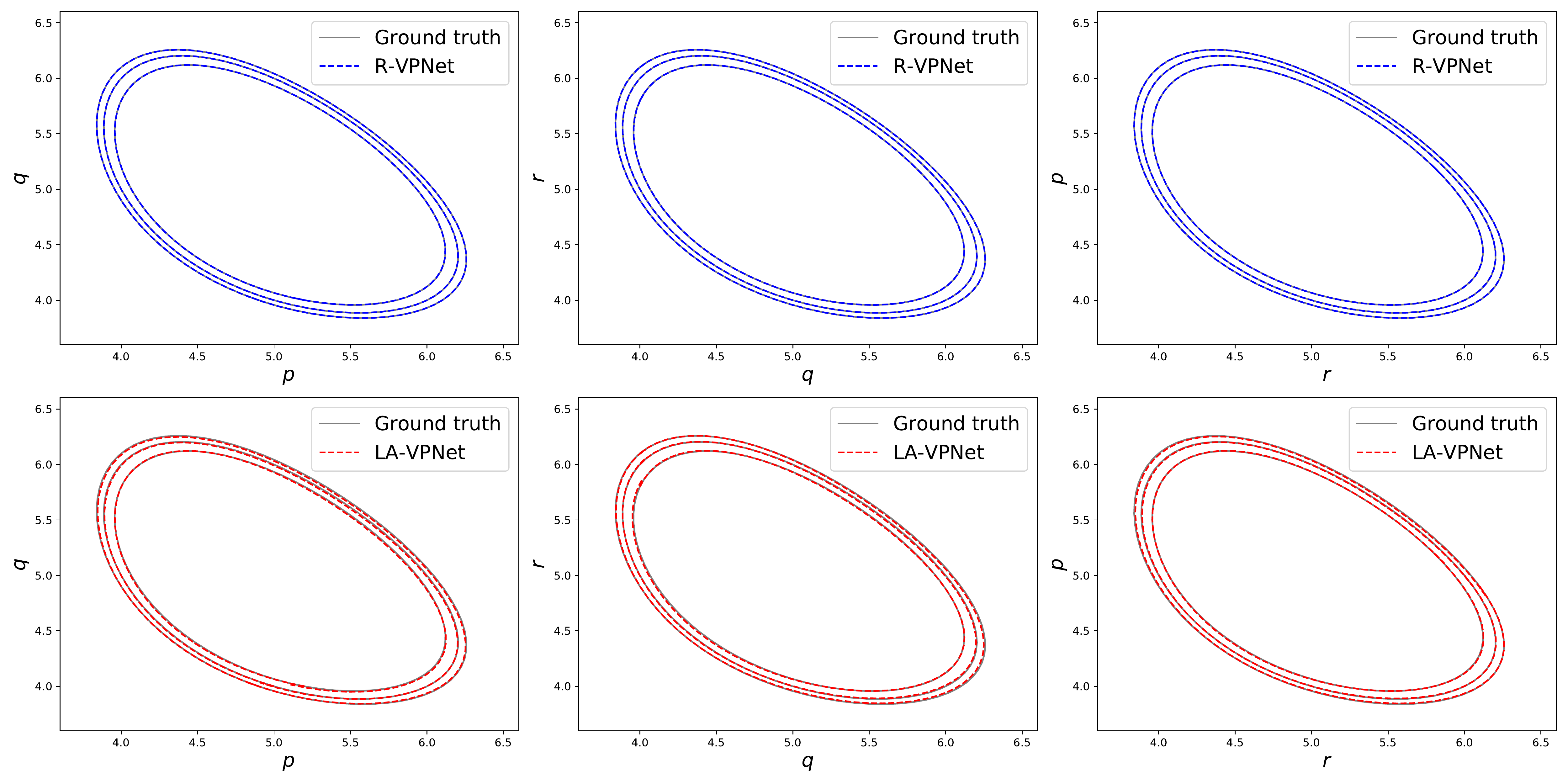}
     \caption{Results for Volterra equations of R-VPNets ({Top}) and LA-VPNets ({Bottom})}
     \label{fig:lv}
 \end{figure}

\subsection{Charged particle dynamics}
We next consider a single charged particle model with the Lorentz force described as
\begin{equation}\label{eq:charged particle}
\begin{aligned}
\frac{d x}{d t}&=v ,\\
\frac{d v}{d t} &=\frac{q}{m} (E(x)+v \times B(x)),
\end{aligned}
\end{equation}
where $m$ is the mass and $q$ is the electric charge, $\mathbf{x} = (x_1,x_2,x_3)\in \R^3$ and $\mathbf{v}=(v_1,v_2,v_3) \in \R^3$ represent the position and velocity of the charged particle, respectively. For simplicity, we set $m=q=1$. The dynamics is governed by the electric field $E(x)$ and the magnetic field $B(x)$, and in this section we consider a time-independent and non-uniform electromagnetic field
\begin{equation*}
B(x) = (0,0,R), \ E(x) = \frac{10^{-2}}{R^3}(x_1,x_2,0)
\end{equation*}
with $R = (x_1^2+x_2^2)^{1/2}$. The energy
\begin{equation}
H = \frac{1}{2} (v_1^2+v_2^2) + \frac{10^{-2}}{(x_1^2+x_2^2)^{1/2}}
\end{equation}
is an invariant which will be utilized for evaluating the performance of different models. Equation (\ref{eq:charged particle}) has all diagonal elements of $f'$ identically zero and thus is source-free. The exact solution is computed by Boris algorithm with very fine mesh. We refer to \cite{qin2013why,tu2016a} for more details and numerical algorithms about the charged particle model.

We aim to learn a single trajectory starting from $\mathbf{x}_0=[0.1,1,0]$, $\mathbf{v}_0=[1,0.2,0]$ which is a 4-dimensional dynamics since $x_3\equiv v_3 \equiv 0$. For the trajectory, 100 pairs of snapshots at $(ih, (i+1)h), i=0,\cdots, 99$ with shared data step $h=0.5$ are used as the training data set. We remark that due to the residual connection, R-VPNets can circumvent degradation and thus are easy to optimize. Therefore, we increase the training
epochs of the LA-VPNet here for fair comparison.

After training, we used the trained model to compute the flow starting at $t=50$. Figure \ref{Results for charged particle systems} shows the prediction results of different models from $t=50$ to $t=125$. We also report the conservation of energy and the global error to demonstrate the performance. The learned system accurately reproduces the phase portrait and preserves the energy error within a reasonable range. Although LA-VPNet has a bigger training loss, its global error is slightly smaller than that of R-VPNets.
 \begin{figure}[htbp]
     \centering
     \includegraphics[width=1\textwidth]{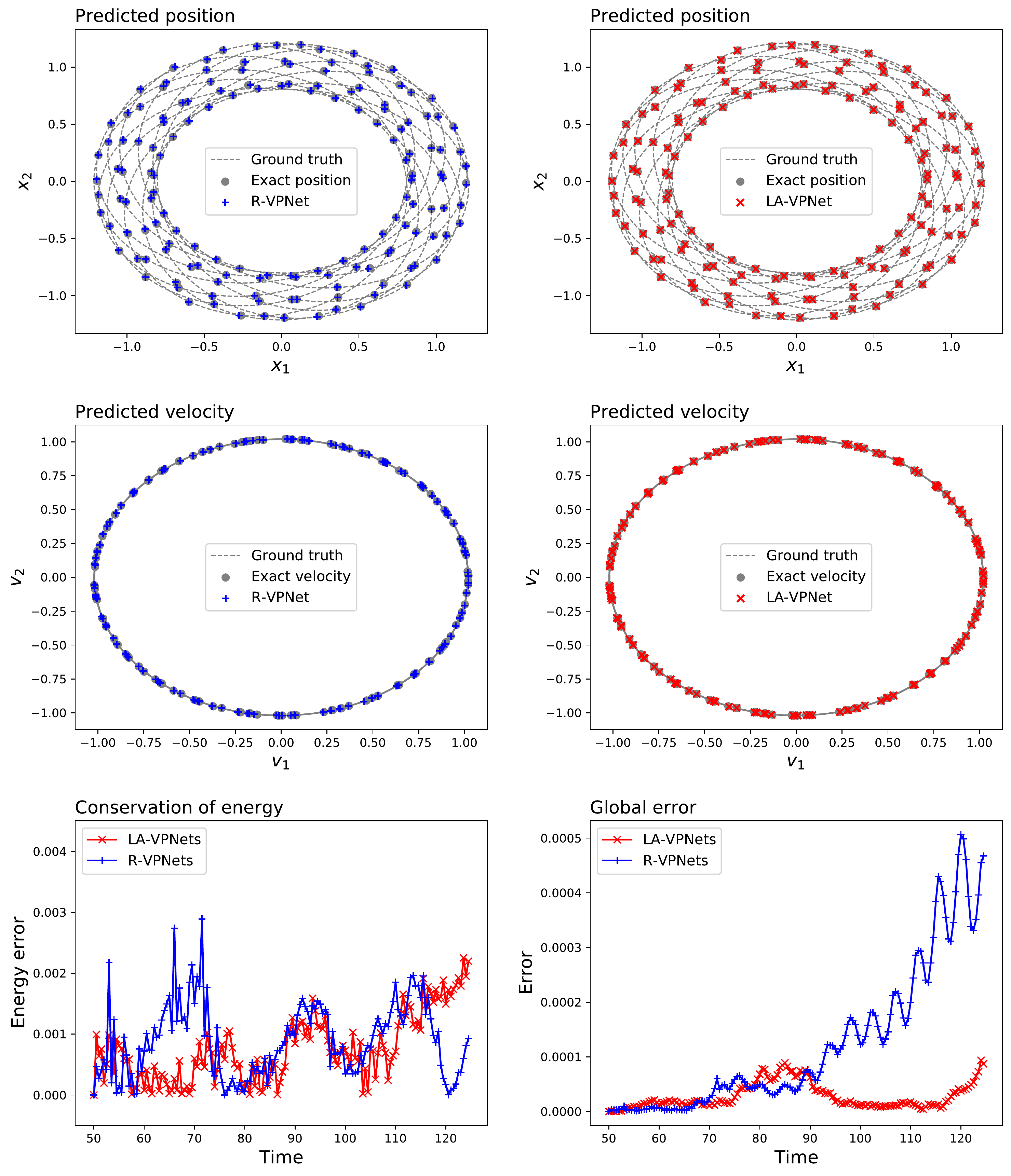}
     \caption{{Results for charged particle systems.} The top row displays the $(x_1,x_2)$orbit predicted by two models, and the middle row displays the velocity predicted by two models. Both VPNets reproduce the dynamics accurately. The absolute energy error for the predicted dynamics and global error of two models are presented in the bottom row. Both VPNets can limit the energy error within a reasonable range.
     }
     \label{Results for charged particle systems}
 \end{figure}

\section{Conclusions and future works}\label{sec: Conclusions and future works}
This work focuses on embedding prior geometric structures, i.e., volume-preserving, into the training data. The main contribution of this work is to propose network models that are intrinsic volume-preserving to identify source-free dynamics. In addition, we prove the approximation theory to show that our models are able to approximate any volume-preserving flow map. Numerical experiments verify our theoretical results and demonstrate the validity of the proposed VPNets in terms of generalization and prediction.

One limitation of our work is the long-time prediction. The approximation theorem only characterizes the local error while the long-time analysis remains open. It will be interesting to improve the long-time prediction behaviors and build corresponding error estimations like symplecticity-preserving networks \cite{chen2021data}.

Our approach is one method of constructing volume-preserving networks. We would also like to explore other approaches, such as generating functions and continuous models, to develop volume-preserving models.

\section*{Acknowledgments}
This research is supported by the Major Project on New Generation of Artificial Intelligence from MOST of China (Grant No. 2018AAA0101002), National Natural Science Foundation of China (Grant Nos. 11775222, 11901564 and 12171466), and the Geo-Algorithmic Plasma Simulator (GAPS) Project.

\bibliography{ref}

\end{document}